\newtheorem{theorem}{Theorem}[section]
\title{GL-NeRF: Gauss-Laguerre Quadrature Enables Training-Free NeRF Acceleration}
\author{%
  Silong Yong\quad\quad Yaqi Xie\quad\quad Simon Stepputtis\quad\quad Katia Sycara\\
  Carnegie Mellon University\\
  \texttt{\{silongy, yaqix, sstepput, sycara\}@andrew.cmu.edu} \\
}
\begin{document}

\maketitle

\begin{abstract}
  Volume rendering in neural radiance fields is inherently time-consuming due to the large number of MLP calls on the points sampled per ray. Previous works would address this issue by introducing new neural networks or data structures. In this work, We propose GL-NeRF, a new perspective of computing volume rendering with the Gauss-Laguerre quadrature. GL-NeRF significantly reduces the number of MLP calls needed for volume rendering, introducing no additional data structures or neural networks. The simple formulation makes adopting GL-NeRF in any NeRF model possible. In the paper, we first justify the use of the Gauss-Laguerre quadrature and then demonstrate this plug-and-play attribute by implementing it in two different NeRF models. We show that with a minimal drop in performance, GL-NeRF can significantly reduce the number of MLP calls, showing the potential to speed up any NeRF model. Code can be found in project page \url{https://silongyong.github.io/GL-NeRF_project_page/}.
\end{abstract}

\section{Introduction}
\label{sec:intro}
\looseness = -1
Neural Radiance Fields (NeRFs)~\cite{mildenhall2021nerf} have shown promising results for synthesizing images from novel views. Plenty of works extend NeRF towards different aspects applicable in the real world (see related works for details). The core component for NeRF's success is volume rendering, which requires approximating an integral by densely sampling points along the ray and evaluating volume density and radiance using neural networks for them. In practice, a dense set of points is evaluated by expensive operations like neural network inferences for a single pixel, which could be redundant. Works have been done to reduce the time needed for rendering images, aiming at providing NeRF with a real-time rendering ability~\cite{garbin2021fastnerf, wu2022diver, liu2020neural, neff2021donerf, fridovich2022plenoxels}. Despite the promising results shown by these works, they propose different approaches for achieving real-time rendering by introducing new networks, new data structures, \etc. Therefore, each individual work requires training from scratch with a specific optimization goal. 
\looseness = -1
In this work, we propose a novel lightweight method that could be implemented in any existing NeRF-based models that require volume rendering without further training. In contrast to existing works, our approach introduces no additional representation or neural network and is training-free. We make minimal modifications to the computation of the volume rendering integral, making it rely on much fewer samples.

\looseness = -1
Our approach arises from revisiting the volume rendering integral, the key discovery is that with a simple change of variable, we can turn the integral into a pure exponentially weighted integral of color. This specific form has a Gauss quadrature (\ie the Gauss-Laguerre quadrature) which best approximates it mathematically. Naturally, we propose to use the Gauss-Laguerre quadrature to directly compute the volume rendering integral, which we call GL-NeRF (Gauss Laguerre-NeRF), leading to much lower computational cost for approximating the integral and therefore lower time and memory usage. Computing the points needed for the integral requires a dense evaluation of per-point density. However, the efficiency for this step can be improved using modern techniques like factorized tensors \cite{chen2022tensorf}. Benefiting from the guarantee of the highest precision Gauss quadrature provides, only a very small number of fixed points could provide comparable results to the heavy and redundant strategy NeRF adopts, leading to free speedup.

\looseness = -1
To verify the use of the Gauss-Laguerre quadrature, we conduct an empirical study on the landscape of color function. We also analyze the relationship between our approach and other techniques that aim to reduce the sample points for NeRF. We demonstrate the plug-and-play property of our method by directly incorporating it into vanilla NeRF and TensoRF models that are already trained on NeRF-Synthetic and LLFF datasets. Furthermore, we showcase the drop in time and memory usage as a direct outcome of reducing the computational cost. 

\looseness = -1
GL-NeRF provides a different perspective for computing volume rendering and has the potential to be a direct plug-in for existing NeRF-based products. Specifically, our contributions are three-fold. We propose GL-NeRF, a brand new perspective for computing volume rendering with the Gauss-Laguerre quadrature with no additional component introduced. We analyze the validity of using the Gauss-Laguerre quadrature for volume rendering integral and the relationship between our approach and existing sample-efficient NeRFs. We demonstrate that GL-NeRF could be incorporated into any NeRF model without further training. To the best of our knowledge, GL-NeRF is the first method that could be used without training in any NeRF models thanks to the simple formulation. We showcase that GL-NeRF reduces computational cost, time and memory usage while keeping the rendering quality.

\begin{figure}
  \centering
    \includegraphics[width=1\textwidth]{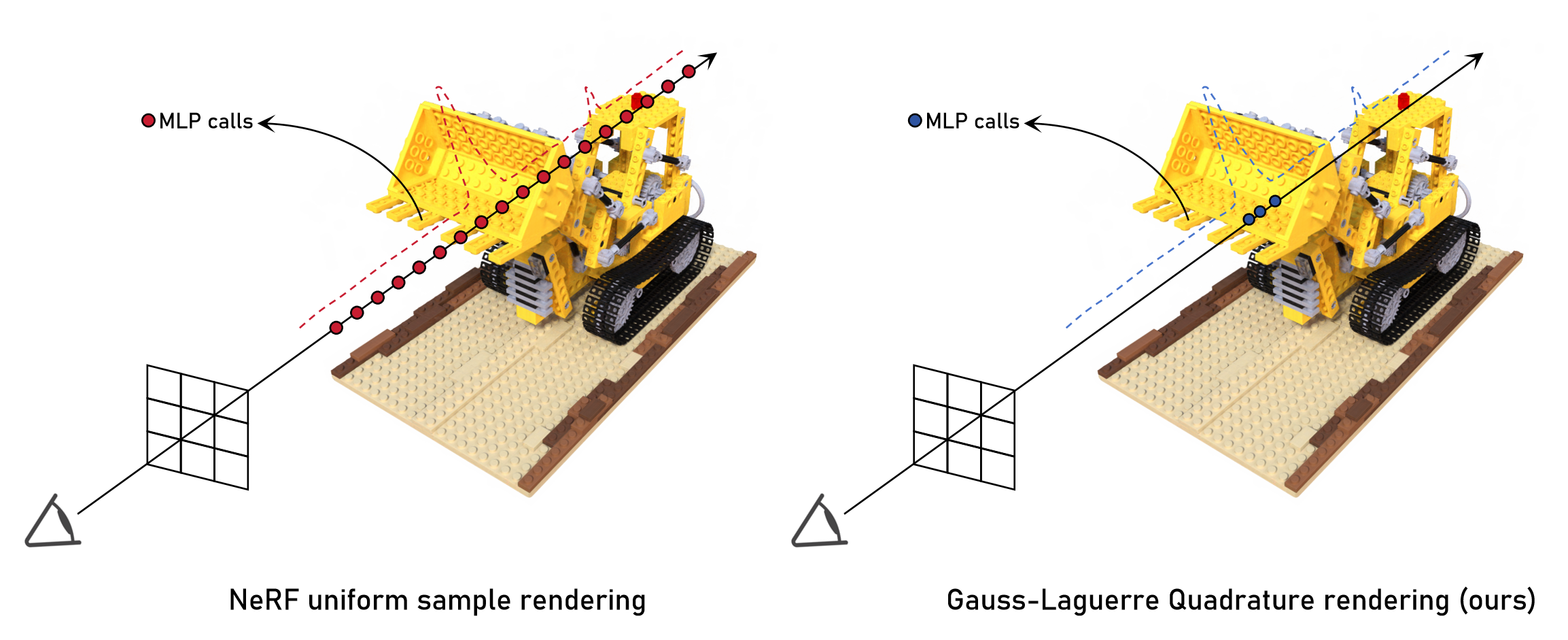}
  \caption{GL-NeRF method overview. The vanilla volume rendering in NeRF requires uniform sampling in space. This leads to a huge number of computationally heavy MLP calls since we have to assign each point a color value. Our approach, GL-NeRF, significantly reduces the number of points needed for volume rendering and selects points in the most informative area.}
  \vspace{-10pt}
  \label{fig:fig1}
\end{figure}

\begin{algorithm}
\label{algo:main}
    \caption{Gauss Laguerre Quadrature for Volume Rendering}\label{glq}
    \hspace*{\algorithmicindent} \textbf{Input: }ray direction $\emph{d}$, ray origin $\emph{o}$, step size \emph{$\Delta t$, sample number $\emph{M}$, Gauss-Laguerre quadrature weight look-up table $L_w$, point look-up table $L_p$}
    \begin{algorithmic}[1]
    \State $t_{\mathtt{min}}, t_{\mathtt{max}}$ = RayIntersectBoundingBox($\emph{d}, \emph{o}$)
    \If {$t_{\mathtt{min}} > t_{\mathtt{max}}$} \Return $\mathtt{bg\_color}$ \EndIf
    \State Initialize $t=t_{min}$, transmittance $T = 1.0$, already sampled point number $n=0$
    \While{$t<t_{\mathtt{max}}$} 
    \If{$(n==m)$} $break$ \EndIf
    \State $\mathtt{pos} = o + t * d$
    \State $\sigma$ = GetVolumeDensity($\mathtt{pos}$)
    \State $x = -log(T)$
    \State $x_{\mathtt{next}} = x + \Delta t * \sigma$
    \If {$x < L_p[n]$ and $x_{\mathtt{next}} \geq L_p[n]$}
    \State $t_{\mathtt{Laguerre}} = (L_p[n] - x) / (x_{\mathtt{next}} - x)$
    \State $\mathtt{pos}_{\mathtt{sample}} = o + (t + t_{\mathtt{Laguerre}} * \Delta t - \Delta t) * d$
    \State $\mathtt{ray\_color} += L_w[n] * $ GetColor($\mathtt{pos}_{\mathtt{sample}}$)
    \State $n = n + 1$
    \EndIf
    \State $t= t + \Delta t$
    \State $T = T * exp(-\sigma * \Delta t)$
    \EndWhile
    \State $\mathtt{bg\_weight} = sum(L_w[n:])$
    \State $\mathtt{ray\_color} = \mathtt{ray\_color} + \mathtt{bg\_weight} * \mathtt{bg\_color}$
    \State \Return $\mathtt{ray\_color}$
    \end{algorithmic}
\end{algorithm}
\section{Related work}
\label{sec:related}
\looseness = -1
\paragraph{Volume rendering.} Volume rendering has been widely used in computer graphics and vision applications~\cite{max1995optical, westover1989interactive, drebin1988volume}. It maps a 3D scene onto 2D images by a weighted integral over the color of the points along the corresponding rays with a function of opacity (volume density) as weight. In practice, the integral is approximated using a finite sum over sampled points along the ray as derived in ~\cite{max1995optical}. Implicit scene models like NeRF~\cite{mildenhall2021nerf}, Plenoxels~\cite{fridovich2022plenoxels} and 3D gaussians~\cite{kerbl20233d} and most of their follow-up all adopt this technique as the render pipeline. Since randomly sampling in space for approximating the integral may bring unnecessary information (\ie sampling in empty space) that may cost extra computation, plenty of works aim to address that by introducing different techniques for better approximation of the component needed for volume rendering integral (\ie volume density, radiance)~\cite{uy2023nerf, wu2022diver, neff2021donerf, lindell2021autoint, li2023l_0, arandjelovic2021nerf, sharma2023volumetric}. PL-NeRF~\cite{uy2023nerf} proposes to use piecewise linear function for approximating the volume density throughout the space, leading to fewer points needed for the ``fine'' stage sampling proposed by ~\cite{mildenhall2021nerf}. AutoInt and DIVeR~\cite{lindell2021autoint, wu2022diver} introduce a neural network for approximating the integral of volume density instead of using Monte-Carlo sampling. DONeRF~\cite{neff2021donerf} reduces the sampled point needed for computing the integral by introducing a depth oracle neural network that predicts the surface position of the underlying scene and samples the points near the surface, which contributes the most to the visual effect in the images.  MCNeRF~\cite{gupta2023mcnerf} proposes to use Monte-Carlo rendering and denoising to do sample efficient rendering, but it still introduces a denoiser network that requires per-scene training. Different from these previous works, Our work proposes to use the Gauss-Laguerre quadrature to directly improve the precision of the volume rendering integral itself, introduces no additional neural networks or data structures and remains in the simplest version, leading to its adaptability into any existing work that relies on volume rendering integral.
\vspace{-10pt}
\looseness = -1
\paragraph{NeRFs.} Neural Radiance Fields (NeRFs) have proved to be a powerful tool for novel view synthesis~\cite{mildenhall2021nerf}. It uses a coordinate-based multi-layer perceptron (MLP) to represent the scene and render high-fidelity images from different views. The render is done by pixel-wise volumetric rendering~\cite{max1995optical} with density and color evaluated using the MLP on hundreds of sampled points along the ray. For modeling high-frequency information in the scene, NeRF uses positional encoding to map the input coordinates onto high-frequency bands. The success of NeRF has triggered an explosive emergence of follow-up works. There are plenty of works focusing on improving or extending the ability of NeRF towards different aspects. Aliasing along xy coordinates has been tackled~\cite{barron2021mip}, unbounded scenes~\cite{barron2022mip, zhang2020nerf++, tancik2022block, reiser2023merf}, dynamic scenes~\cite{pumarola2021d, li2021neural, ost2021neural} and scenes with semantic information~\cite{vora2021nesf, siddiqui2023panoptic, zhi2021place, kundu2022panoptic} have been well explored and demonstrated the potential of implicit scene representation with NeRF. Nonetheless, NeRF requires plenty of time for training and rendering, blocking its way of being used for real-time rendering. The bottleneck of the computation time is the MLP used. There are two main branches of work for extending NeRF towards real-time rendering. The first branch introduces different data structure~\cite{yu2021plenoctrees, garbin2021fastnerf, hedman2021baking, reiser2021kilonerf, fridovich2022plenoxels, chen2022tensorf} for scene representation. Another branch, in which our method falls, improves the sample efficiency of the model~\cite{kurz2022adanerf, neff2021donerf, piala2021terminerf, sitzmann2021light} to accelerate NeRF rendering process. While previous works draw their intuition from the underlying physics perspective and thus need different formulations of the sampling strategy and different neural network architecture for predicting the surface position of the underlying scenes, we propose our method based on a mathematical observation while maintaining the overall pipeline. Benefiting from this, our work could be seamlessly incorporated into any existing NeRF-related works without further training. On the other hand, despite being derived from the mathematical perspective, our method still intuitively satisfies the underlying physical constraints.
\section{Preliminaries}
\label{sec:pre}
\subsection{NeRF and volume rendering}
\label{subsec:nerf}
\looseness = -1
NeRF~\cite{mildenhall2021nerf} is a powerful implicit 3D scene model for novel view synthesis. At the core of its rendering ability is volume rendering. NeRF uses coordinate-based MLP to encode the scene, assigning volume density (opacity) and radiance (color) to spatial points. When used for synthesizing new views, it casts a ray $\bm{r}(t) = \bm{o} + t\bm{d}$ through the pixel to be rendered, sample points along the ray and compute volume density and radiance for these points. These values are then aggregated together using Eq.\ref{eq:volume_rendering} to give the color of the pixel.
\begin{equation}
 \hat{\bm{C}}(\bm{r}) = \sum_{i=1}^N w_i\bm{c}(\bm{r}(t_i)),
  \label{eq:volume_rendering}
\end{equation}
where
\begin{equation}
w_i = T_i(1-exp(-\sigma(\bm{r}(t_i))\delta_i)),
\label{eq:volume_rendering_w}
\end{equation}
\begin{equation}
T_i = exp(-\sum_{j=0}^{i-1}\sigma(\bm{r}(t_j))\delta_j),
  \label{eq:volume_rendering_t}
\end{equation}
$t_i$ represents the sampled position along the ray and $\delta_i = t_{i + 1}-t_{i}$ is the distance between two nearby sampled points.
\looseness = -1
NeRF uses an MLP to represent volume density $\sigma$ and color $\bm{c}$. The loss function for training NeRF is simply the square error between rendered pixel colors and the corresponding pixel colors over batch of rays $\mathbf{R}$. Variants of NeRF like TensoRF\cite{chen2022tensorf} use different representations for volume density and color, but the process of volume rendering remains the same.
\begin{equation}
\mathbf{L} = \sum_{\bm{r}\in \mathbf{R}} \Vert \hat{\bm{C}}(\bm{r}) - \bm{C}(\bm{r})\Vert_2^2
  \label{eq:nerf_loss}
\end{equation}
\vspace{-10pt}
\subsection{Gauss quadrature}
\label{subsec:gq}
An $n$-point Gauss quadrature~\cite{gauss1815methodvs} is a method for numerical integration that guarantees to yield exact results for integral of polynomials of degree $2n-1$ or less, which is the highest possible precision for approximating an integral by quadrature. Intuitively, consider approximating an integral using quadrature as in Eq. \ref{eq:integral_example}
\begin{equation}
 \int_{-1}^1 p(x)dx=\sum_{i=1}^n w_ip(x_i),
  \label{eq:integral_example}
\end{equation}
\looseness = -1
where $p(x)$ is a polynomial of degree $2n - 1$, $w(x)$ is a weight function and $I$ is the interval for computing the integral. We first give the definition of orthogonality of two polynomials $p_m(x)$ and $p_n(x)$
\begin{equation}
 \int_{-1}^1p_m(x)p_n(x)dx=0,
  \label{eq:orthogonal_def}
\end{equation}
\looseness = -1
where $p_m(x)$ is of degree $m$, $p_n(x)$ is of degree $n$ and $m\neq n$. we can use long division for $p(x)$ to obtain
\begin{equation}
 p(x) = q(x)L_n(x) + r(x),
  \label{eq:intuition_gauss}
\end{equation}
\looseness = -1
where $L_n(x)$ is a polynomial of degree $n$ that is orthogonal to any polynomials that have degree less than $n$ (\ie $n$ degree Legendre polynomial), $q(x)$ and $r(x)$ are both polynomials with degree less than $n$. Then
\begin{equation}
 \int_{-1}^1p(x)dx = \int_{-1}^1q(x)L_n(x)dx + \int_{-1}^1r(x)dx.
  \label{eq:after_long_division}
\end{equation}
\looseness = -1
Since $L_n(x)$ is orthogonal to any polynomials with degree less than $n$, the first term on the right hand side of Eq.~\ref{eq:after_long_division} should equal to $0$. Since it doesn't contribute to the computation of the integral, we may also neglect it when computing the quadrature. Therefore, we should choose $x_i$ that satisfies $L_n(x_i) = 0$~\cite{jacobi1826ueber}. With this intuition bearing in mind, carefully choosing the weights $w_i$ for computing the quadrature would help us precisely calculate Eq.~\ref{eq:integral_example} because we have $n$ points to compute the second term on the right hand side of Eq.~\ref{eq:after_long_division}, which is an integral of a polynomial of degree less than $n$.

In general, given a function $f(x)$, Gauss quadrature computes its integral on $[-1, 1]$ using
\begin{equation}
\int_{-1}^{1}f(x)dx \approx \sum_{i=1}^n w_if(x_i),
  \label{eq:gauss_legendre}
\end{equation}
\looseness = -1
where $x_i, i=1, 2, \dots, n$ corresponds to a root of the orthogonal polynomials on $[-1, 1]$. This quadrature is called Gauss-Legendre quadrature since the orthogonal polynomials on $[-1, 1]$ with a weight function $g(x)=1$ are Legendre polynomials. An $n$-th degree Legendre polynomial takes the form~\cite{rodrigues1816attraction, ivory1824v, jacobi1827ueber}
\begin{equation}
P_n(x)=\frac{1}{2^nn!}\frac{d^n}{dx^n}(x^2-1)^n,
  \label{eq:legendre_polynomial}
\end{equation}
and $w_i$ is computed using Eq. \ref{eq:gauss_legendre_w} as shown in ~\cite{abramowitz1988handbook}.\\
\begin{equation}
w_i = \frac{2}{(1-x_i^2)[P'_n(x_i)]^2}.
  \label{eq:gauss_legendre_w}
\end{equation}
\looseness = -1

\begin{wrapfigure}[26]{r}{0.5\textwidth}
  \centering
  \includegraphics[width=0.48\textwidth]{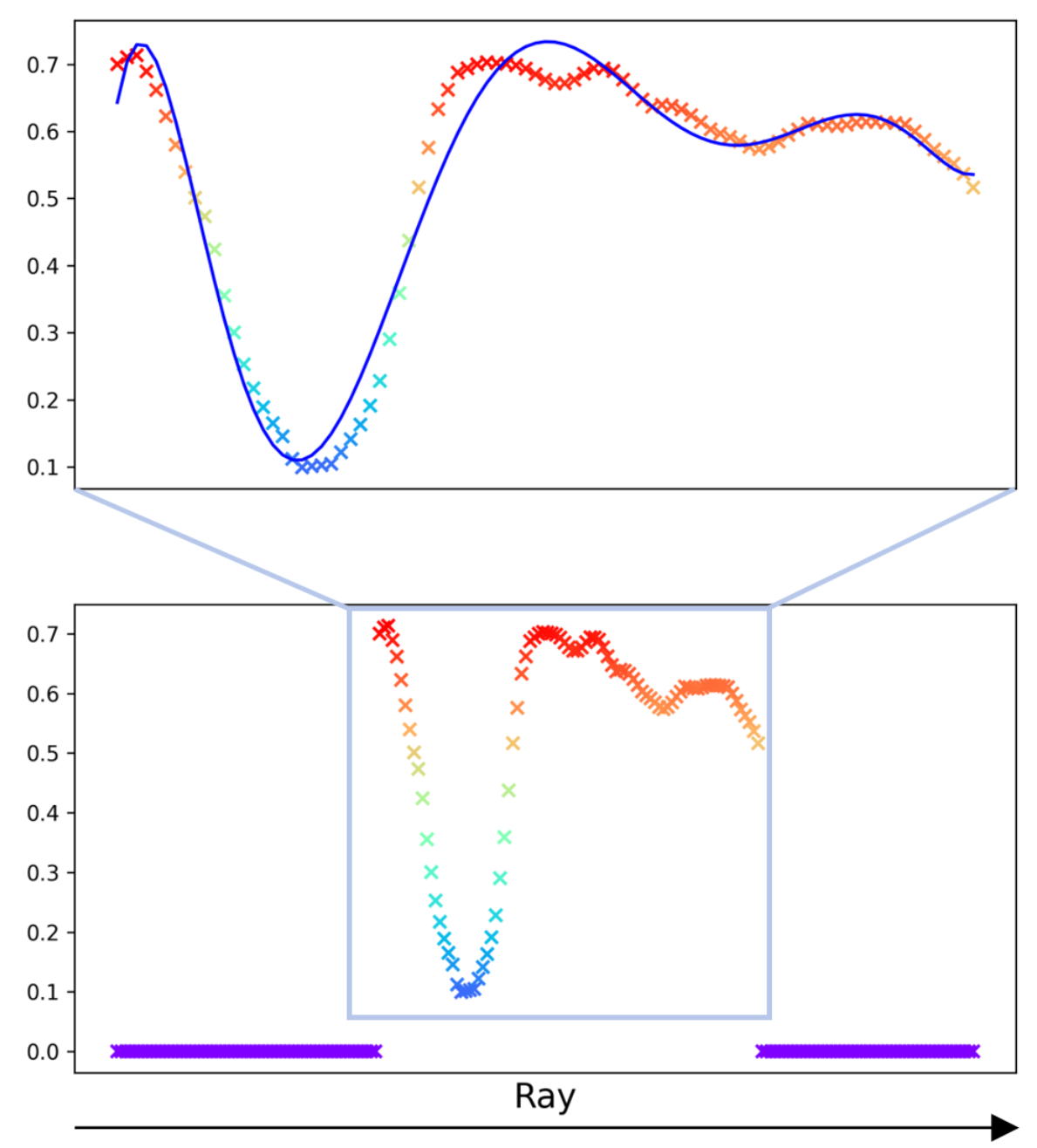}
  \caption{Verification on using the Gauss-Laguerre quadrature for volume rendering. We plot the red channel of the color function w.r.t. the ray it corresponds to. The color function remains zero in most of the interval (bottom). We use a 7th-degree polynomial to approximate the non-zero region (top). As can be seen, the color function itself is similar to a polynomial, validating the use of our approach.}
  \label{fig:empirical}
\end{wrapfigure}

\paragraph{Gauss-Laguerre quadrature} is an extension of Gauss quadrature for approximating integrals following the form of 
\begin{equation}
\int_0^{\infty}e^{-x}f(x)dx \approx \sum_{i=1}^n w_if(x_i).
  \label{eq:gauss_laguerre}
\end{equation}
In this case, the weight function is $g(x)=e^{-x}$, the integral interval is $[0, \infty)$. $x_i$ corresponds to the root of Laguerre polynomials 
\begin{equation}
 L_n(x)=\frac{1}{n!}(\frac{d}{dx}-1)^nx^n,
 \label{laguerre_polynomial}
\end{equation}
a class of polynomials that are orthogonal over the interval $[0, \infty)$ with respect to the weight function $g(x)=e^{-x}$. The weight for computing the quadrature is computed as 
\begin{equation}
w_i = \frac{x_i}{(n+1)^2[L_{n+1}(x_i)]^2}.
  \label{eq:gauss_laguerre_w}
\end{equation}
While the computation for $x_i$ and $w_i$ is complicated, in practice we can use a look up table to store corresponding $x_i$ and $w_i$ for a given $n$.

\section{GL-NeRF}
\label{sec:methods}
We developed our algorithm based on a simple observation of the integral for volume rendering. Eq. \ref{eq:volume_rendering} is an approximation to the integral
\begin{equation}
C(\bm{r}) = \int_{t_n}^{t_f} T(t)\sigma(\bm{r}(t))\bm{c}(\bm{r}(t), d)dt,
  \label{eq:volume_rendering_integral}
\end{equation}
where
\begin{equation}
T(t) = exp(-\int_{t_n}^t\sigma(\bm{r}(s))ds).
  \label{eq:volume_rendering_integral_t}
\end{equation}

\begin{wrapfigure}[20]{r}{0.5\textwidth}
  \centering
  \vspace{-10pt}
  \includegraphics[width=0.48\textwidth]{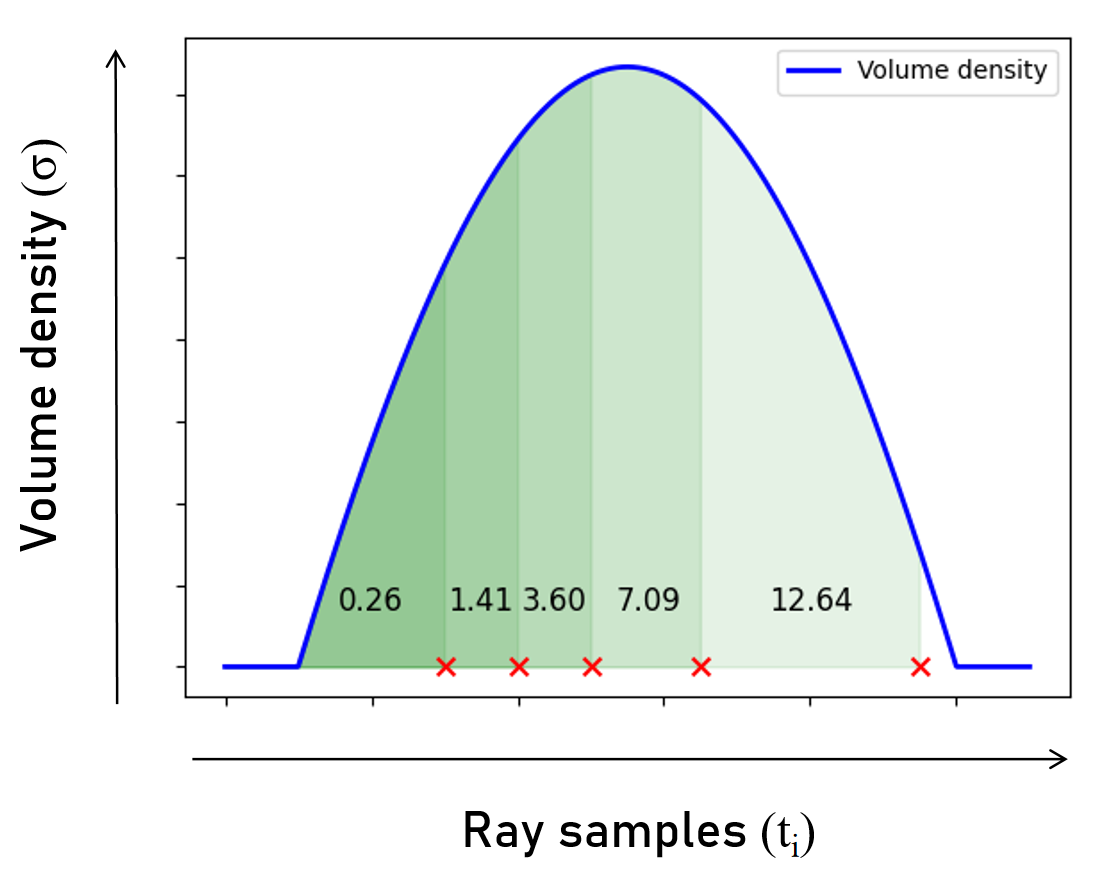}
  \caption{\textbf{Point Selection strategy in GL-NeRF.} We choose points along the ray that satisfy the integral from zero to the point of the volume density function should be equal to the roots of Laguerre polynomials. The points selected is then used for querying the color. In the figure above is an example of choosing $5$ points using a $5$-degree Laguerre polynomial. The number on the plot indicates the value of the integral from zero to the right boundary of the region.}
  \label{fig:point_sel}
\end{wrapfigure}

\subsection{Volume rendering and Gauss-Laguerre quadrature}
\label{subsec:glqvl}
Let 
\begin{equation}
x(t) = \int_{t_n}^t\sigma(\bm{r}(s))ds,
  \label{eq:volume_rendering_integral_x}
\end{equation}
we have 
\begin{equation}
\frac{dx}{dt} = \sigma(\bm{r}(t)).
  \label{eq:volume_rendering_integral_x_diff}
\end{equation}
\looseness = -1
Since $\sigma(\bm{r}(t)) \geq 0$, $x(t)$ is a monotonically non-decreasing function of $t$, therefore, $x$ has a unique correspondence with $t$ on increasing intervals. With this observation, we can do a change of variables for Eq. \ref{eq:volume_rendering_integral} to get
\begin{equation}
    \begin{aligned}
    C(\bm{r}) &= \int_{t_n}^{t_f} T(t)\sigma(\bm{r}(t))\bm{c}(\bm{r}(t), d)dt\\
&=\int_{t_n}^{t_f}e^{-x}\bm{c}(\bm{r}(t), d)\frac{dx}{dt}dt\\
&=\int_{x(t_n)}^{x(t_f)}e^{-x}\bm{c}(\bm{r}(x), d)dx.
    \end{aligned}
  \label{eq:volume_rendering_integral_lag}
\end{equation}
As can be seen from Eq. \ref{eq:volume_rendering_integral_lag}, the integral for volume rendering is a weighted integral of $\bm{c}(\bm{r}(x), d)$ with the weight function to be $g(x)=e^{-x}$. We can extend the integral interval from $[x(t_n), x(t_f)]$ to $[0, \infty)$ since the integral between $[0, x(t_n))$ and $(x(t_f), \infty)$ are zero.
Thus, we have 
\begin{equation}
    C(\bm{r}) =\int_{0}^{\infty}e^{-x}\bm{c}(\bm{r}(t(x)), d)dx,
  \label{eq:volume_rendering_integral_final}
\end{equation}
a pure exponentially weighted integral with respect to the color function, which is of the exact same form as required by the Gauss-Laguerre quadrature.

\subsubsection{Gauss-Laguerre quadrature for volume rendering}
\label{subsec:veri}

As discussed in Sec. \ref{sec:pre}, the Gauss-Laguerre quadrature guarantees the highest algebraic precision when computing integral over polynomials. To perform the Gauss-Laguerre quadrature for volume rendering integral calculation, a natural question arises: \textbf{is the color function a polynomial, or can it be approximated by a polynomial with a satisfactory error rate?}

To answer this question, we first analytically give out a fundamental theorem, and then empirically approximate the color function with polynomials.

\begin{theorem}[Stone-Weierstrass theorem]
\label{weierstrass}
Suppose $f$ is a continuous real-valued function defined on the real interval $[a, b]$. For every $\epsilon > 0$, there exists a polynomial $p$ such that for all $x$ in $[a, b]$, we have $|f(x)-p(x)| < \epsilon$.
\end{theorem}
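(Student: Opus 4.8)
The plan is to prove the classical Weierstrass approximation theorem by an explicit, constructive argument using Bernstein polynomials, which has the advantage of producing the approximating polynomial directly rather than invoking the abstract lattice machinery of the general Stone-Weierstrass theorem. First I would reduce to the unit interval: the affine substitution $x = a + (b-a)t$ with $t \in [0,1]$ carries a continuous function on $[a,b]$ to a continuous function on $[0,1]$, and carries a polynomial in $t$ back to a polynomial in $x$, so it suffices to approximate a continuous $g$ on $[0,1]$ uniformly.

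Next I would introduce the degree-$n$ Bernstein polynomial
\begin{equation}
B_n(g)(t) = \sum_{k=0}^{n} g\!\left(\frac{k}{n}\right)\binom{n}{k} t^k (1-t)^{n-k},
\end{equation}
and record the two elementary binomial identities $\sum_k \binom{n}{k} t^k (1-t)^{n-k} = 1$ and $\sum_k (k/n - t)^2 \binom{n}{k} t^k (1-t)^{n-k} = t(1-t)/n$. Because the weights sum to one, I can write the error as $B_n(g)(t) - g(t) = \sum_k \bigl(g(k/n) - g(t)\bigr)\binom{n}{k} t^k (1-t)^{n-k}$, which reduces the problem to controlling this weighted average of increments of $g$.

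The core of the argument is to split this sum at a threshold $\delta$. By the Heine-Cantor theorem $g$ is uniformly continuous, so given $\epsilon>0$ I fix $\delta$ with $|g(s)-g(t)|<\epsilon/2$ whenever $|s-t|<\delta$; the terms with $|k/n - t|<\delta$ then contribute at most $\epsilon/2$ since the weights sum to one. For the remaining terms, where $|k/n-t|\ge\delta$, I would use the variance identity together with a Chebyshev-type bound, $\sum_{|k/n-t|\ge\delta}\binom{n}{k}t^k(1-t)^{n-k} \le t(1-t)/(n\delta^2) \le 1/(4n\delta^2)$, so that these terms contribute at most $2\|g\|_\infty/(4n\delta^2)$. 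Choosing $n$ large enough makes the total error below $\epsilon$ uniformly in $t$, and transporting back through the affine change of variables yields the polynomial $p$ with $|f-p|<\epsilon$ on $[a,b]$.

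I expect the main obstacle to be the variance identity $\sum_k (k/n-t)^2 \binom{n}{k} t^k (1-t)^{n-k} = t(1-t)/n$ and the accompanying tail estimate: this is where the \emph{uniform} (rather than merely pointwise) nature of the approximation is secured, and obtaining a bound independent of $t$ is the delicate point. An alternative route would invoke the abstract Stone-Weierstrass theorem directly, observing that the polynomials form a subalgebra of $C([a,b])$ that contains the constants and separates points; but that merely shifts the difficulty into proving the general lattice-approximation lemma, so the Bernstein construction is the more economical path here.
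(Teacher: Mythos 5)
Your proposal is correct and follows essentially the same route as the paper's own proof: both use the Bernstein polynomials $\sum_{k} f(k/n)\binom{n}{k}x^k(1-x)^{n-k}$, uniform continuity to handle the terms with $|k/n - x| < \delta$, and the binomial variance identity with a Chebyshev-type tail bound for the rest (the paper derives the variance from the first and second moment identities rather than quoting it directly, and states the theorem on $[0,1]$ without spelling out your affine reduction from $[a,b]$, but these are cosmetic differences). No gaps to report.
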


Since the pixel color is contributed by points that have a density larger than a threshold (\ie regions near the surface), we can overlook the points in the empty space and only analyze the remaining part of the color function. In Fig.~\ref{fig:empirical} we plot a representative of how the color function looks like. As can be seen from the figure, it has a major region with values greater than zero while the others remain zero. When approximating the non-zero region with a $7$-th degree polynomial, we have a relative error rate lower than $6.5\%$. While the relative error is not sufficiently low, we argue that we can increase the degree to better approximate it since it's cintinuous by nature. On the other hand, this specific landscape is fluctuated and for most of the cases, the error rate could be smaller than $1\%$. This suggests that Theorem~\ref{weierstrass} holds in our case, thus the Gauss-Laguerre quadrature can be used for computing the volume rendering integral.

\subsubsection{Point selection in GL-NeRF}
\label{subsec:point}
\looseness = -1
Different from NeRF's sampling strategy, the Gauss-Laguerre quadrature enables us to use a deterministic point selection strategy for the color samples. Recall Eq. \ref{eq:volume_rendering_integral_x} is the 
integral variable for Eq. \ref{eq:volume_rendering_integral_final}. This means if we want to use the Gauss-Laguerre quadrature to approximate Eq. \ref{eq:volume_rendering_integral_final}, we have to choose points $x_i$ that are the root of $n$th-degree Laguerre polynomials. Since every $x_i$ has a corresponding $t_i$ following Eq. \ref{eq:volume_rendering_integral_x}, we can choose $t_i$ based on given value of $x_i$, as depicted in Fig. \ref{fig:point_sel}. Specifically, we want the integral Eq. \ref{eq:volume_rendering_integral_x} to be equal to the roots of an $n$th-degree Laguerre polynomial. Fig. \ref{fig:point_sel} gives an example of $n=5$. In the figure, the numbers in the five regions filled with different colors indicate the integral value of the volume density function from zero to the right boundary of the regions. A pseudocode for GL-NeRF rendering is shown in Algo. \ref{algo:main}.

\begin{figure}
  \centering
   \includegraphics[width=1\linewidth]{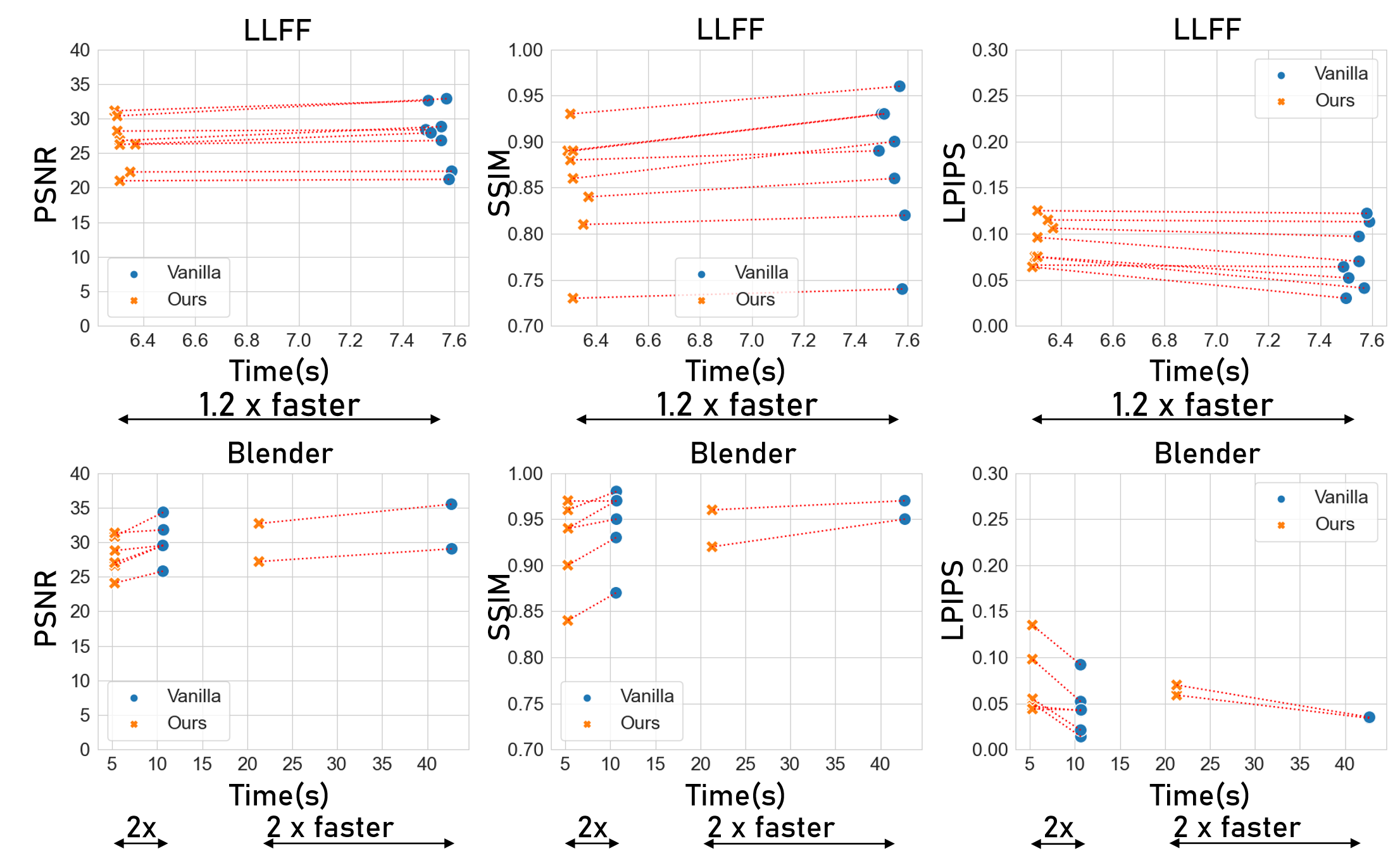}

   \caption{Comparison between GL-NeRF and vanilla NeRF in terms of render time and quantitative metrics. Each point on the figure represents an individual scene. We showcase that with the drop of computational cost GL-NeRF provides, the average time needed for rendering one image is 1.2 to 2 times faster than the vanilla NeRF. In the mean time, the overall performance remains almost the same despite some minor decreases.}
   \label{fig:metric_w_time}
\end{figure}
\begin{wraptable}[16]{R}{0.3\textwidth}
  \centering
  \begin{tabular}{c|c}
    \toprule
    $x_i$ & $w_i$ \\
    \midrule
    0.17 & $3.69\times 10^{-1}$\\
    0.90 & $4.19\times 10^{-1}$\\
    2.25 & $1.76\times 10^{-1}$\\ 
    4.27 & $3.33\times 10^{-2}$\\ 
    7.05 & $2.79\times 10^{-3}$ \\
    10.76 & $9.08\times 10^{-5}$\\
    15.74 & $8.49\times 10^{-7}$\\
    22.86 & $1.05\times 10^{-9}$ \\
    \bottomrule
  \end{tabular}
  \caption{\looseness = -1 Gauss-Laguerre quadrature look-up table when $n=8$.}
  \label{tab:laguerre}
\end{wraptable}
\subsubsection{Intuitive understanding of the points selected using the Gauss-Laguerre quadrature}
\looseness = -1
Since the points near the surface contribute the most to the final color of the pixel as discussed in~\cite{kurz2022adanerf, neff2021donerf, piala2021terminerf}, the optimal point selection strategy should choose points near the surface. The volume density, on the other hand, increases remarkably near the surface and remains close to zero at other areas. Therefore, the integral value of it Eq.~\ref{eq:volume_rendering_integral_x} should also increases significantly near the surface and remains almost unchanged throughout the rest of the space. Therefore, most of the points chosen using GL-NeRF should lie around the surface of the underlying scene.
\looseness = -1
Consider a case when $n=8$, we want to choose points $t_i, i=1, 2, \dots, 8$ such that $x(t_i)$ in Eq. \ref{eq:volume_rendering_integral_x} should be equal to the value $x_i$ given in the look-up table Tab. \ref{tab:laguerre}. Notice that the first few value for $x_i$ (say first three) are small so that they could be reached by the integral of volume density near the surface easily. These values have relatively larger weights assigned to them. Evaluating the color of these points using a neural network and summing them up using the weights $w_i$ given in Eq. \ref{tab:laguerre} following Eq.~\ref{eq:gauss_laguerre} would contribute mostly to the pixel color. Notice that even though the last few $x_i$ are quite large and may not be reached by Eq. \ref{eq:volume_rendering_integral_x} along the ray, their corresponding weights are so small that they almost couldn't affect the final result of the pixel color.
\looseness = -1
Hence, the points selected using GL-NeRF also correspond to the points near the surface, like in previous works~\cite{kurz2022adanerf, neff2021donerf, piala2021terminerf} that design different neural networks for estimating the surface position, but only without any additional neural networks. Therefore, thanks to the nice property of the Gauss quadrature, ideally we can select the optimal points for computing volume rendering integral if the volume density estimation is oracle.

\begin{figure}
  \centering
  \includegraphics[width=1\linewidth]{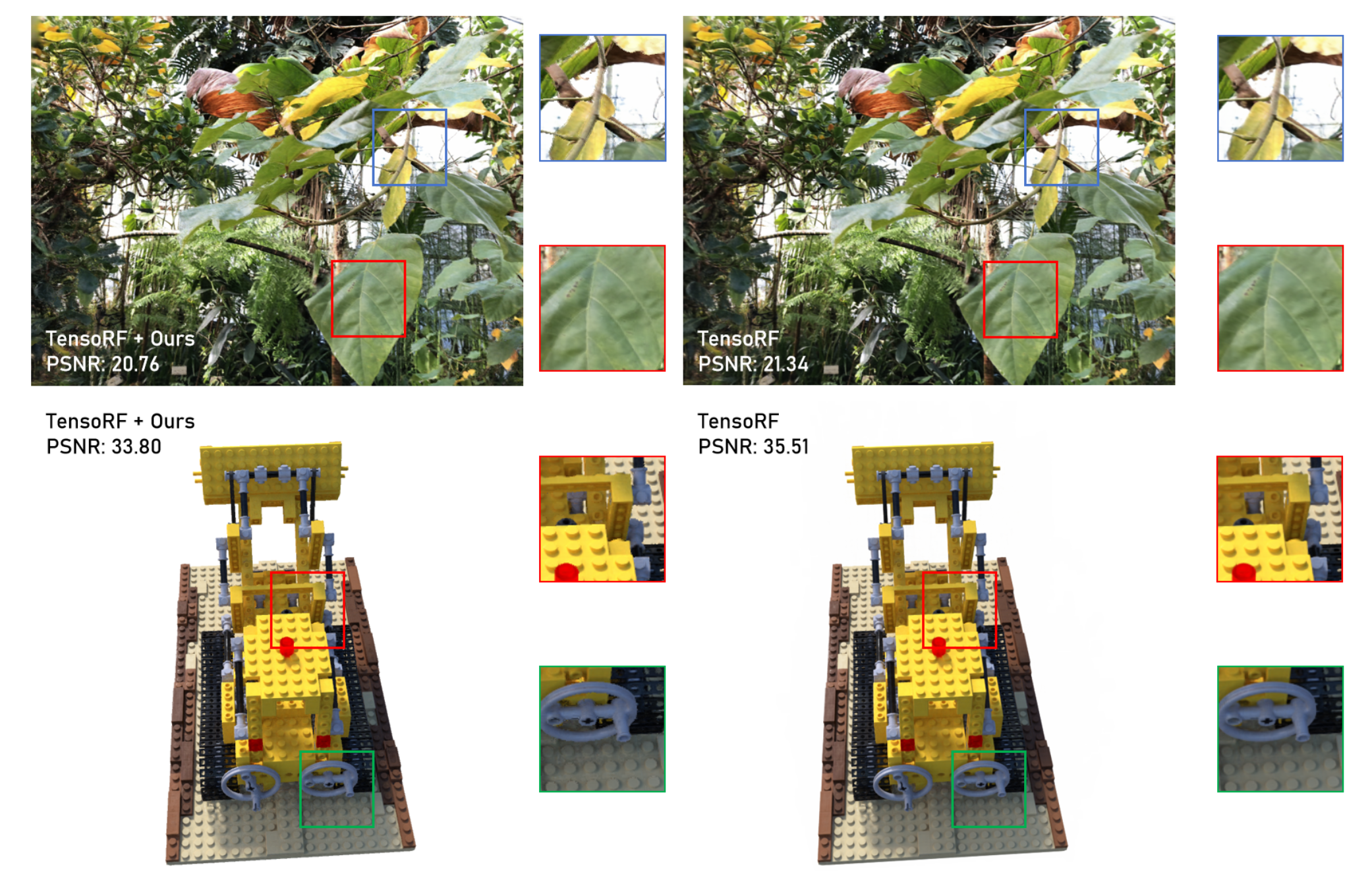}
  \vspace{-10pt}
  \caption{Qualitative results on LLFF (top) and NeRF-Synthetic (bottom) datasets. We could tell from the comparisons that the drop in performances has minimal effect on the visual quality.}
  \label{fig:qual}
\end{figure}
\vspace{-10pt}
\section{Experiments}
\label{sec:exp}
\label{subsec:impl}
\looseness = -1
\paragraph{Datasets and evaluation metrics.}
We evaluate our method on the standard datasets: NeRF-Synthetic and Real Forward Facing Dataset(LLFF)~\cite{mildenhall2019local} as in~\cite{mildenhall2021nerf} with two different models, \ie Vanilla NeRF~\cite{mildenhall2021nerf}, TensoRF~\cite{chen2022tensorf} and InstantNGP~\cite{muller2022instant}. Since our method is training-free, we conduct render-only experiments with the vanilla volume rendering method and our method. We plot the standard render quality evaluation metrics PSNR, SSIM~\cite{wang2004image} and LPIPS~\cite{zhang2018unreasonable} with respect to the average time needed for rendering one image for each scene in Vanilla NeRF. We also report the metrics with averaged color MLP calls for TensoRF and InstantNGP. For Vanilla NeRF, we use 32 points for our method while the network is trained with more than 100 points. For TensoRF and InstantNGP, the results are produced with 4 MLP calls if not otherwise mentioned. More details can be found in Sec. \ref{app:impl}.

\begin{table}
  \label{tab:quantitative}
  \centering
  \begin{tabular}{llllll}
    \toprule
    Dataset& Methods & Avg. MLPs$\downarrow$ & PSNR$\uparrow$ & SSIM$\uparrow$ & LPIPS$\downarrow$\\
    \midrule
    \multirow{ 2}{*}{LLFF}& TensoRF & 118.51  & 26.51 & 0.832 & 0.135     \\
    & ours     & 4 & 25.63 & 0.797 & 0.146      \\
    \midrule
    \multirow{ 2}{*}{NeRF-Synthetic}& TensoRF & 31.08  & 32.39 & 0.957 & 0.032     \\
    & ours     & 4 & 30.99 & 0.945 & 0.048      \\
    \bottomrule
    \vspace{2pt}
  \end{tabular}
  \caption{Quantitative comparison. We demonstrate that our method has a minimal performance drop while significantly reducing the number of color MLP calls.}
\end{table}

\begin{figure}
  \centering
  \includegraphics[width=1\linewidth]{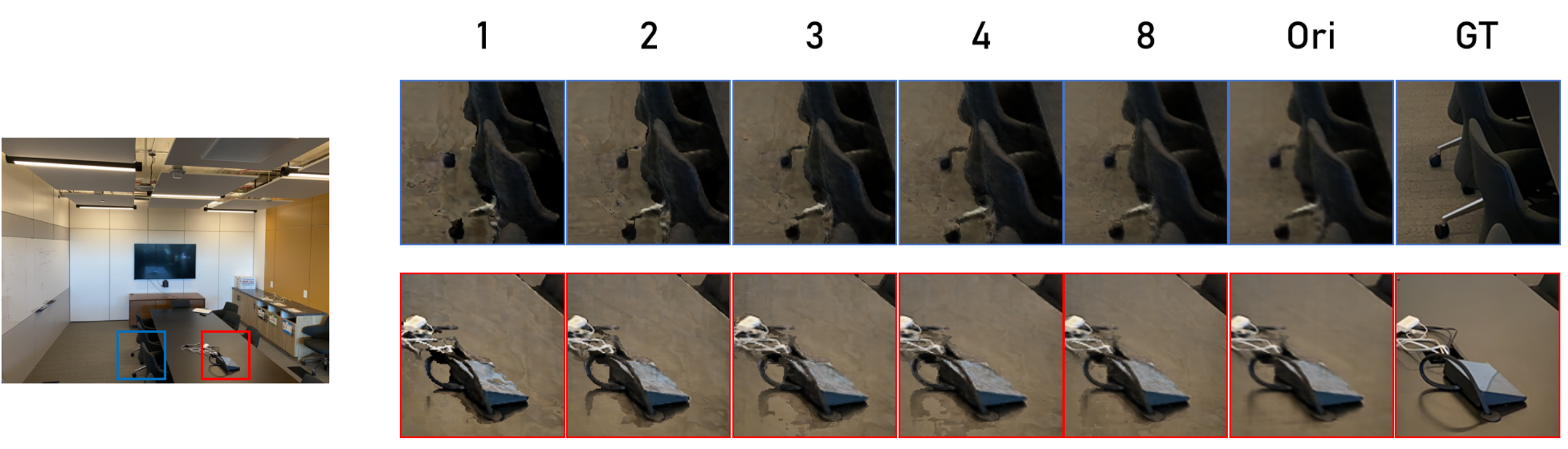}
  \caption{Effect of sample number. The first five columns correspond to the number of sampled points on top. The sixth column shows the result of the original sampling strategy adopted in TensoRF (Ori). The last column is the ground truth visualization of the details in the scene. Our method could achieve comparable results using only 4-8 points while the original strategy requires more than 100 points. The blurriness in the first two columns is inherently the inaccuracy of piece-wise constant density estimation.}
  \vspace{-10pt}
  \label{fig:ablation}
\end{figure}

\begin{table*}
  \label{tab:train_main}
  \centering
  \begin{tabular}{lllll}
    \toprule
    Dataset& Methods & PSNR$\uparrow$ & SSIM$\uparrow$ & LPIPS$\downarrow$\\
    \midrule
    \multirow{ 2}{*}{LLFF}& Vanilla  & 27.62 &0.88 & 0.073     \\
    & ours     & 27.21 & 0.87 & 0.087      \\
    \midrule
    \multirow{ 2}{*}{NeRF-Synthetic}&  Vanilla  & 30.63 &0.95 & 0.037     \\
    & ours     & 29.18 & 0.93 & 0.056      \\
    \bottomrule
  \end{tabular}
  \caption{Quantitative comparison when training with GL-NeRF. Vanilla refers to the vanilla NeRF and its sampling strategy while ours refers to replacing the fine sample stage in vanilla NeRF with our sampling strategy, i.e. GL-NeRF. The result for Vanilla NeRF is produced by rendering using more than 100 points while GL-NeRF only uses 32 points.}
\end{table*}

\begin{wraptable}[16]{R}{0.6\textwidth}
  \caption{Ablation study on the number of points sampled. The more points we have, the better the performance will be. With 8 points, our method is comparable to the original sampling strategy in TensoRF.}
  \vspace{10pt}
  \label{tab:ablation}
  \centering
  \begin{tabular}{llll}
    \toprule
    Point number & PSNR$\uparrow$ & SSIM$\uparrow$ & LPIPS$\downarrow$\\
    \midrule
    1 & 23.49 & 0.752& 0.166     \\
    2 & 24.90 & 0.782 & 0.142 \\
    3 & 25.38 & 0.791 & 0.145    \\
    4 & 25.63 & 0.797 & 0.146 \\
    8 & 26.10 & 0.812 & 0.142     \\
    \midrule
    Ori & 26.51 & 0.832 & 0.135 \\
    \bottomrule
  \end{tabular}
\end{wraptable}
\subsection{Comparison with baselines}
\label{subsec:vanilla}
\looseness = -1
We showcase that our method can be used for rendering novel views based on pretrained NeRF without further training. We plotted the quantitative metrics of GL-NeRF and original NeRF for an intuitive comparison in Fig. \ref{fig:metric_w_time}. It shows that our method achieves comparable results as the original NeRF while requiring less computation, leading to 1.2 to 2 times faster rendering. We also observed a drop in memory usage due to the fewer MLP calls we have. We further implement our method with TensoRF~\cite{chen2022tensorf}. As can be seen from Tab.~\ref{tab:quantitative}, our method significantly reduces the number of MLP calls needed for volume rendering while the rendering quality only drops a little. We observe that the minimal drop in the performance has little effect on the quality of the image. Some qualitative comparisons can be found in Fig.~\ref{fig:qual}. Other than TensoRF, we implement our method on top of InstantNGP~\cite{muller2022instant} to showcase the plug-and-play attribute of our method. Our method performs similarly on Blender dataset to InstantNGP as shown in Tab.~\ref{tab:ins}.
\vspace{-10pt}
\subsection{Discussion on acceleration}
\label{sec:discussion}
\looseness = -1
The reason why the speed-up in Vanilla NeRF doesn't lead to real-time performance is that it has another heavy neural network for estimating the volume density. While our method needs cheap density estimation, it can be easily achieved by recent efforts in NeRF like factorized tensors~\cite{chen2022tensorf}. Therefore, reducing the number of color MLP calls needed could lead to real-time performance as shown by previous work~\cite{gupta2023mcnerf}. We therefore follow MC-NeRF~\cite{gupta2023mcnerf} and develop a real-time renderer based on WebGL and train a small variant of TensoRF with $8$ channels for each density component and color component and $32$ as hidden size for the color MLP. The result on Lego in the Blender dataset is shown in Tab.~\ref{tab:real_time}. GL-NeRF is able to provide almost real-time performance in WebGL with similar quality as TensoRF running on an AMD Ryzen 9 5900HS CPU thanks to the reduced number of color MLP calls.

\vspace{-10pt}
\subsection{Ablation studies}
\looseness = -1
We further study the effect of sampled points per ray. We conduct experiments using the TensoRF model on the LLFF dataset. We found that 8 points per ray already shows comparable results to the original sampling strategy that uses more than 100 points. Quantitative comparison can be found in Tab. \ref{tab:ablation}. Qualitatively, in Fig. \ref{fig:ablation} we found that less number of points would lead to blurrier results. Since the points selected using GL-NeRF intuitively correspond to where the surface is, we argue that the blurriness comes from the inherent inaccuracy of piece-wise constant density estimation.

\subsection{Discussion on GL-NeRF usage for training}
While we mainly showcase that GL-NeRF is a general alternative to the sampling strategy for volume rendering at test time, it is also capable of being used for training. We demonstrate this by replacing the fine sample stage in Vanilla NeRF with GL-NeRF and show the result in Tab.~\ref{tab:train_main} and Tab. \ref{tab:train}. GL-NeRF is able to produce on-par results with the vanilla sampling strategy but use a much smaller number of points, i.e. 32 for GL-NeRF and more than 100 for vanilla NeRF.

\begin{table*}
\begin{center}
\setlength\tabcolsep{3 pt}
\begin{tabular}{cl|c|cccccccc}
\toprule
\multicolumn{2}{c|}{\textbf{Blender}} & Avg. & Chair & Drums & Ficus & Hotdog & Lego & Mat. & Mic & Ship\\\hline
\multirow{2}{*}{PSNR$\uparrow$} & InstantNGP& 32.05 & 34.13 & 25.61 & 31.91 & 36.32 & 34.72 & 29.09 & 34.92 & 29.73\\
 & Ours & 30.35 & 33.08 & 25.07 & 30.13 & 34.78 & 33.05 & 26.54 & 33.02 & 27.15 
\\\bottomrule
\end{tabular}
\vspace{5pt}
\caption{Per-scene results on Blender dataset between InstantNGP and ours. We demonstrate that GL-NeRF is able to be plugged into ANY NeRF models.}
\label{tab:ins}
\end{center}
\end{table*}

\begin{table}
  \label{tab:real_time}
  \centering
  \begin{tabular}{lllll}
    \toprule
    Method & PSNR$\uparrow$ & SSIM$\uparrow$ & LPIPS$\downarrow$ & FPS$\uparrow$\\
    \midrule
    TensoRF  & 33.28 & 0.97 & 0.016 &  5.84    \\
    ours     & 33.09 & 0.97 & 0.016 &  22.34    \\
    \bottomrule
    \vspace{5pt}
  \end{tabular}
  \caption{Comparison between our method and TensoRF on Lego scene using WebGL-based renderer. The result is collected from an AMD Ryzen 9 5900HS CPU. GL-NeRF is able to provide almost real-time rendering while remaining similar quality as TensoRF.}
\end{table}

\vspace{-10pt}
\section{Conclusion}
\label{sec:conclusion}
In this paper, we propose GL-NeRF, a novel approach for calculating the volume rendering integral. We show that with a simple change of variable, the Gauss-Laguerre quadrature can be used for computing the volume rendering integral. Thanks to the highest algebraic precision guaranteed by the Gauss-Laguerre quadrature, GL-NeRF significantly reduces the number of MLP calls needed for the volume rendering integral. We justify the use of the Gauss-Laguerre quadrature theoretically and empirically and showcase the plug-and-play attribute of GL-NeRF in two different NeRF models. Experiments show the potential of GL-NeRF being used for accelerating any existing NeRF model. We also demonstrate that GL-NeRF can be used for training vanilla NeRF, providing a potential new direction for neural rendering research.
\looseness = -1
\paragraph{Limitations.} While GL-NeRF shows promising results in reducing the number of MLP calls needed, it still affects the rendering quality despite the theoretical guarantee of the highest precision. How to improve the performance so that it would meet the theoretical results would be interesting.

\section*{Acknowledgement}
The authors would like to thank the author of MC-NeRF for the useful tips on developing the WebGL renderer. This work has been funded in part by the Army Research Laboratory (ARL) award W911NF-23-2-0007, DARPA award FA8750-23-2-1015, and ONR award N00014-23-1-2840.

{
\small
\bibliographystyle{plain}
\clearpage
\bibliography{reference}
}


\clearpage
\appendix
\section{Appendix}

Here we introduce the implementation details, give a brief introduction of the Gauss-Laguerre quadrature and present our quantitative results on NeRF-Synthetic and LLFF datasets. 

\subsection{Implementation details}
\label{app:impl}
For Vanilla NeRF, our experiments are conducted based upon NeRF-PyTorch~\cite{lin2020nerfpytorch}, a reproducible PyTorch implementation of the original NeRF~\cite{mildenhall2021nerf}. We implement our method by changing the hierarchical sampling strategy into our point selection method using the Gauss-Laguerre quadrature. We follow the standard setting as done in ~\cite{mildenhall2021nerf} to train a ``coarse'' and a ``fine'' network for evaluation. We use a learning rate of $5\times 10^{-4}$ that exponentially decays to $5\times 10^{-5}$ over the course of optimization. Each scene is trained for 200k iterations using a single NVIDIA RTX 6000 GPU. We use 128 coarse samples and 32 fine samples to test our method. Since the density estimation for ``coarse'' and ``fine'' network are not aligned, we test our method using only the ``fine'' network by first using ``coarse'' samples to query it for an estimation of density, then use GL-NeRF to select $32$ points for final rendering as discussed in Sec. \ref{subsec:point}. LLFF scenes are trained and tested with 64 coarse samples and 64 fine samples for baseline method, while NeRF-Synthetic scenes require 128 coarse samples and 64 fine samples. For TensoRF, we directly use the pretrained checkpoints in the folder VM48 provided by the authors. The qualitative results are produced by 4 neural network calls if not otherwise mentioned. For InstantNGP, we build our code on top of the public PyTorch implementation~\cite{instant-nsr-pl} and train it with the default setting. The final results for GL-NeRF are also produced by 4 neural network calls.

\subsection{Gauss-Laguerre quadrature}
\label{app:proof}


The Gauss-Laguerre quadrature is an approximation formula for computing integrals over the semi-infinite interval $[0, +\infty)$ with the weight function $e^{-x}$ and reads as
\begin{equation}\label{gl1}
\int_0^{+\infty}e^{-x}f(x)dx \approx \sum_{k=0}^nw_kf(x_k). 
\end{equation}
Here $x_0, x_1, \cdots, x_n\in[0, +\infty)$ are the zeros of the Laguerre polynomial $L_{n+1}=L_{n+1}(x)$ of degree $(n+1)$:
\begin{equation*}
L_{n+1}(x)=\frac{1}{(n+1)!}e^x\frac{d^{n+1}}{dx^{n+1}}(x^{n+1}e^{-x}),
\end{equation*}
for $n=-1, 0, 1, \cdots$,
and the coefficients 
\begin{equation}\label{gl3}
w_k = \frac{1}{x_k[L_{n+1}'(x_k)]^2}, \quad k=0,1, 2, \cdots, n.
\end{equation}

From the Leibniz formula, it is easy to see that $L_n(x)$ is a polynomial of degree $n$ and the coefficient of $x^n$ is $\frac{(-1)^n}{n!}$. In particular, we have
$$
L_0 =1, \qquad L_1 = 1-x, \qquad L_2 = \frac{1}{2}x^2 - 2x +1, \cdots. 
$$
The fundamental property of the Laguerre polynomials is 

\begin{theorem}
The Laguerre polynomials $L_n=L_n(x)$ are 
orthogonal with respect to the weight function $e^{-x}$, that is, 
$$
\int_0^{+\infty}e^{-x}L_n(x)L_m(x)dx = \left\{\begin{tabular}{c}
$0, \qquad n\ne m,$ \\[4mm]
$1, \qquad n= m.$
\end{tabular}\right. 
$$
\end{theorem}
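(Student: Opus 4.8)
The plan is to prove both cases at once by exploiting the Rodrigues-type formula for the Laguerre polynomials, namely $L_n(x)=\frac{1}{n!}e^x\frac{d^n}{dx^n}(x^n e^{-x})$, combined with repeated integration by parts. First I would substitute this expression for $L_n$ (the higher-degree factor, taking $m\le n$ without loss of generality) into the integral, so that the weight $e^{-x}$ cancels against the $e^x$ in the Rodrigues formula and the integral becomes
\begin{equation*}
\int_0^{+\infty}e^{-x}L_n(x)L_m(x)\,dx=\frac{1}{n!}\int_0^{+\infty}L_m(x)\frac{d^n}{dx^n}\bigl(x^n e^{-x}\bigr)\,dx.
\end{equation*}
The main engine of the proof is then to integrate by parts $n$ times, moving all $n$ derivatives off the $x^n e^{-x}$ factor and onto the polynomial $L_m$, picking up a sign $(-1)^n$ in the process.

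The step I expect to be the main obstacle is checking that every boundary term generated during the $n$ integrations by parts vanishes. To control these I would use the Leibniz rule to expand
\begin{equation*}
\frac{d^k}{dx^k}\bigl(x^n e^{-x}\bigr)=\sum_{j=0}^k\binom{k}{j}\frac{n!}{(n-j)!}x^{n-j}(-1)^{k-j}e^{-x},
\end{equation*}
and observe that for $k<n$ every term carries a positive power of $x$, so the expression vanishes at $x=0$, while the factor $e^{-x}$ forces it to vanish as $x\to+\infty$ for all $k$. This kills all boundary contributions and leaves, after the full $n$ integrations,
\begin{equation*}
\frac{(-1)^n}{n!}\int_0^{+\infty}x^n e^{-x}\frac{d^n}{dx^n}L_m(x)\,dx.
\end{equation*}

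From here the two cases separate cleanly. When $m<n$, the polynomial $L_m$ has degree $m<n$, so its $n$-th derivative is identically zero and the integral vanishes, giving orthogonality. When $m=n$, I would use the fact stated earlier in the excerpt that the coefficient of $x^n$ in $L_n$ is $\frac{(-1)^n}{n!}$, so that $\frac{d^n}{dx^n}L_n(x)=(-1)^n$ is a constant; the integral then reduces to $\frac{(-1)^n}{n!}\cdot(-1)^n\int_0^{+\infty}x^n e^{-x}\,dx=\frac{1}{n!}\int_0^{+\infty}x^n e^{-x}\,dx$. Finally I would invoke the Gamma-function identity $\int_0^{+\infty}x^n e^{-x}\,dx=n!$ to conclude that this equals $1$, completing the normalization. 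The only genuinely delicate point throughout is the boundary-term bookkeeping; the algebra after that is routine.
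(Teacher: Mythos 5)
Your proof is correct. It shares the paper's core machinery --- the Rodrigues representation $L_n(x)=\frac{1}{n!}e^x\frac{d^n}{dx^n}(x^ne^{-x})$, $n$-fold integration by parts, and the same Leibniz-based argument that boundary terms vanish (every surviving term in $\frac{d^k}{dx^k}(x^ne^{-x})$ for $k<n$ carries both a positive power of $x$ and a factor $e^{-x}$) --- but the endgame is genuinely different. The paper keeps \emph{both} polynomials in Rodrigues form: after the first round of integration by parts it faces $[e^xg_m^{(m)}(x)]^{(n)}$, expands it by a second application of Leibniz, performs another round of integrations by parts term by term, evaluates gamma integrals $\int_0^{+\infty}x^{n-j}e^{-x}dx=(n-j)!$, and finally collapses an alternating binomial sum to $(1-1)^{n-m}$, which yields $0$ for $n\neq m$ and the normalization for $n=m$ in one stroke. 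You instead keep $L_m$ as a bare polynomial and dump all $n$ derivatives onto it, so the case $m<n$ dies immediately because the $n$-th derivative of a degree-$m$ polynomial is identically zero, and the case $m=n$ reduces, via the leading coefficient $\frac{(-1)^n}{n!}$ (which the paper indeed records beforehand), to $\frac{1}{n!}\int_0^{+\infty}x^ne^{-x}dx=1$. Your route is shorter and avoids all combinatorial bookkeeping; the paper's uniform computation buys a single closed-form expression covering both cases but at the cost of a second Leibniz expansion and the binomial identity. One small presentational point: when you argue the boundary terms vanish at $+\infty$, you should say explicitly that the other factor $L_m^{(j)}$ is a polynomial, so the product with (polynomial)$\cdot e^{-x}$ still tends to zero --- this is implicit in your write-up and is exactly the kind of detail the paper makes explicit via its functions $g_k$.
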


\begin{proof}
Assume $m\leq n$ and set $g_k(x)= x^ke^{-x}$. From the Leibniz formula it follows that, for $j<k$, $g_k^{(j)}(x)$ is a product of $xe^{-x}$ and a polynomial of degree $(k-1)$ and thereby 
$g_k^{(j)}(0) = 0 = g_k^{(j)}(+\infty)$ for $j<k$. Thus, we deduce that 
\begin{equation}
\label{gl4}
\begin{aligned}
& n!m!\int_0^{+\infty}e^{-x}L_n(x)L_m(x)dx \\
= & \int_0^{+\infty}e^{-x}e^xg_n^{(n)}(x)e^xg_m^{(m)}(x)dx \\
= &\int_0^{+\infty}e^xg_m^{(m)}(x)dg_n^{(n-1)}(x) \\
= & g_n^{(n-1)}(x)[e^xg_m^{(m)}(x)]|^{+\infty}_0 \\
&- \int_0^{+\infty}[e^xg_m^{(m)}(x)]'dg_n^{(n-2)}(x) \\
= & - g_n^{(n-2)}(x)[e^xg_m^{(m)}(x)]'|^{+\infty}_0 \\
&+ \int_0^{+\infty}[e^xg_m^{(m)}(x)]{''}dg_n^{(n-3)}(x) \\
=&\cdots\cdots\\
= & (-1)^{n}\int_0^{+\infty}g_n(x)[e^xg_m^{(m)}(x)]^{(n)}(x)dx\\
\end{aligned}
\end{equation}
By the Leibniz formula, we have 
\begin{equation*}
\begin{aligned}
   & \left[e^xg_m^{(m)}(x)\right]^{(n)} = 
\sum_{j=0}^n\frac{n!}{(n - j)!j!}(e^x)^{(n-j)}g_m^{(m+j)}(x)\\
= & e^x\sum_{j=0}^n\frac{n!}{(n - j)!j!}g_m^{(m+j)}(x)\\
\end{aligned}
\end{equation*}
and thereby 
\begin{equation*}
\begin{aligned}
& n!m!\int_0^{+\infty}e^{-x}L_n(x)L_m(x)dx \\
=& (-1)^{n}\sum\limits_{j=0}^n\frac{n!}{(n - j)!j!}\int_0^{+\infty}x^ng_m^{(m+j)}(x)dx \\
= & \sum\limits_{j=0}^{n-m}\frac{n!(-1)^{n+m+j}}{(n - j)!j!}\int_0^{+\infty}\frac{n!x^{n- m - j}}{(n - m - j)!}g_m(x)dx \\
= & \sum\limits_{j=0}^{n-m}\frac{n!(-1)^{n+m+j}}{(n - j)!j!}\int_0^{+\infty}\frac{n!}{(n - m - j)!}x^{n- j}e^{-x}dx \\
= & (-1)^{n}\sum\limits_{j=0}^{n-m}\frac{n!}{(n - j)!j!}(-1)^{m+j}\frac{n!(n-j)!}{(n - m - j)!} \\
= & (-1)^{n+m}\frac{(n!)^2}{(n-m)!}\sum\limits_{j=0}^{n-m}\frac{(n-m)!}{j!(n - m - j)!}(-1)^{j}\\
=&(-1)^{n+m}\frac{(n!)^2}{(n-m)!}(1-1)^{n-m}. 
\end{aligned}    
\end{equation*}
Here the second equality is similar to that in \ref{gl4} and the fourth uses 
\begin{equation*}
\int_0^{+\infty}x^{n- j}e^{-x}dx = (n-j)!.     
\end{equation*}
This completes the proof. 
\end{proof}

The orthogonality of the Laguerre polynomials ensures that the $L_n(x)$'s are linearly independent and $L_{n+1}(x)$ has $(n+1)$ distinct zeros $x_0, x_1, \cdots, x_n$ in $[0, +\infty)$ \cite{gautschi2011numerical}. With the zeros, the coefficients $w_k$ are chosen so that the following $(n+1)$ equalities
\begin{equation}
\label{5}
\int_0^{+\infty}e^{-x}x^jdx = \sum_{k=0}^nw_kx^j_k,
\end{equation}
hold for $j = 0, 1, \cdots, n$. This leads to a system of $(n+1)$ linear algebraic equations for the unknowns $w_k$ and the corresponding coefficient matrix is the Vandermonde matrix $[x^j_k]_{(n+1)\times(n+1)}$. The latter is invertible since the zeros are distinct and therefore the $w_k$'s are uniquely determined. The specific expressions of the $w_k$'s are given in \ref{gl3} \cite{gautschi2011numerical}.

It is remarkable that all the coefficients $w_k$ are non-negative. This important property ensures the stability and convergence of the Gauss-Laguerre quadrature \cite{gautschi2011numerical}. Moreover, we have 
\begin{theorem}
The algebraic precision of the Gauss-Laguerre quadrature \ref{gl1} is $(2n + 1)$ exactly. Namely, "$\approx$" in \ref{gl1} is "$=$" if $f(x)$ is a polynomial of degree $(2n+1)$ and is not "$=$" if $f(x)$ is a polynomial with degree higher than $(2n+1)$.
\end{theorem}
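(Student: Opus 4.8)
The plan is to prove both halves of the claim by exploiting the dual role that the Laguerre polynomial $L_{n+1}$ plays in the formula \ref{gl1}: its orthogonality controls the exact integral, while its roots are exactly the quadrature nodes $x_0, \dots, x_n$. These two facts, pinned to the \emph{same} polynomial, are what upgrade the precision from the $n$ built into the weights to the full $2n+1$.

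First I would establish exactness for an arbitrary polynomial $f$ of degree at most $2n+1$. The key device is the division algorithm: since $L_{n+1}$ has degree $n+1$, I can write $f(x) = q(x)L_{n+1}(x) + r(x)$ with $\deg q \le n$ and $\deg r \le n$. On the integral side, writing $q$ in the basis $L_0, \dots, L_n$ and invoking the orthogonality theorem proved just above gives $\int_0^{+\infty} e^{-x} q(x) L_{n+1}(x)\,dx = 0$, so $\int_0^{+\infty} e^{-x} f(x)\,dx = \int_0^{+\infty} e^{-x} r(x)\,dx$. On the quadrature side, each node $x_k$ is a root of $L_{n+1}$, so $q(x_k)L_{n+1}(x_k) = 0$ and hence $\sum_{k=0}^n w_k f(x_k) = \sum_{k=0}^n w_k r(x_k)$. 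It then remains only to match the two reduced quantities: because $\deg r \le n$ and the weights $w_k$ were defined precisely so that the quadrature is exact on $1, x, \dots, x^n$ (equation~\ref{5}), linearity yields $\sum_{k=0}^n w_k r(x_k) = \int_0^{+\infty} e^{-x} r(x)\,dx$. Chaining the three identities delivers exactness for $f$.

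For the second half a single counterexample of degree $2n+2$ suffices, and the natural choice is $f(x) = [L_{n+1}(x)]^2$. Its weighted integral equals $1$ by the orthogonality theorem applied with both indices equal to $n+1$, whereas every node $x_k$ is a root of $L_{n+1}$, so the quadrature sum $\sum_{k=0}^n w_k [L_{n+1}(x_k)]^2$ is identically $0$. Since $1 \ne 0$, the formula fails for this polynomial of degree $2n+2$, so the degree of exactness cannot be $2n+2$ or higher; combined with the first half, the precision is exactly $2n+1$.

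The conceptual crux — what I would flag as the heart of the argument rather than a genuine obstacle — is the synchronization in the first half: the annihilation of $\int e^{-x} q L_{n+1}$ (via orthogonality) and of $\sum_k w_k\, q(x_k) L_{n+1}(x_k)$ (via the node property) must come from one and the same $L_{n+1}$. The only routine technical point to handle with care is the degree bookkeeping in the division step, namely confirming $\deg q \le (2n+1)-(n+1) = n$, together with the remark that orthogonality of $L_{n+1}$ to each $L_j$ with $j \le n$ extends to orthogonality against the arbitrary degree-$\le n$ polynomial $q$; the latter is immediate once $q$ is expanded in the Laguerre basis.
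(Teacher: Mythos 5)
Your proposal is correct and follows essentially the same route as the paper's own proof: both halves rest on the division $f = qL_{n+1} + r$, the orthogonality of $L_{n+1}$ against degree-$\le n$ polynomials, the nodes being roots of $L_{n+1}$, exactness of the weights on $1, x, \dots, x^n$ via \ref{5}, and the counterexample $f = L_{n+1}^2$ for sharpness. The only difference is cosmetic—you prove exactness before sharpness, while the paper does the reverse.
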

\begin{proof}
Notice that 
\begin{equation*}
(n+1)!L_{n+1}(x) =(-1)^{n+1} \Pi_{k=0}^n(x-x_k)    
\end{equation*}

is a polynomial of degree $(n+ 1)$. Since 
\begin{equation*}
0< \int_0^{+\infty}e^{-x}L_{n+1}^2(x)dx\ne0 = 
\sum\limits_{k=0}^nw_kL_{n+1}^2(x_k) ,     
\end{equation*}
the precision is less than $2(n+1)$. On the other hand, for any polynomial $p=p(x)$ of degree $(2n+1)$ there are 
two polynomials of degree $n$ such that $p(x) = q(x)L_{n+1}(x) + r(x)$. Notice that $q(x)$ can be written as a linear combination of $L_0(x), L_1(x), \cdots, L_n(x)$. Compute
$$
\begin{array}{rl}
\sum_{k=0}^nw_kp(x_k) = & \sum_{k=0}^nw_k[q(x_k)L_{n+1}(x_k) + r(x_k)] \\[4mm]
= & \sum_{k=0}^nw_kr(x_k) \\[4mm]
= & \int_0^{+\infty}e^{-x}r(x)dx \\[4mm]
= &\int_0^{+\infty}e^{-x}[q(x)L_{n+1}(x) + r(x)]dx \\[4mm]
= & \int_0^{+\infty}e^{-x}p(x)dx. 
\end{array}
$$
Here the third equality is due to \ref{5} (the choice of $w_k$) and the fourth is due to the orthogonality of $L_{n+1}(x)$ and $q(x)$ with respect to the weight function. Hence the proof is complete. 
\end{proof}

For further details on the Gauss-Laguerre quadrature and for other Gauss quadratures, the interested reader is referred to the book \cite{gautschi2011numerical}.

\begin{table*}
\begin{center}
\setlength\tabcolsep{3 pt}
\begin{tabular}{cl|c|cccccccc}
\multicolumn{2}{c|}{\textbf{NeRF-Synthetic}} & Avg. & Chair & Drums & Ficus & Hotdog & Lego & Mat. & Mic & Ship\\\hline
\multirow{2}{*}{PSNR$\uparrow$} & TensoRF& 32.39 & 34.68 & 25.58 & 33.37 & 36.81 & 35.51 & 29.54 & 33.59 & 30.12\\
 & Ours & 30.99 & 33.98 & 25.15 & 30.41 & 35.75 & 33.80 & 27.32 & 32.52 & 30.12 \\\hline
\multirow{2}{*}{SSIM$\uparrow$} & TensoRF & 0.96 & 0.98 & 0.93 & 0.98 & 0.98 & 0.98 & 0.94 &0.98 &0.88 \\
 & Ours & 0.94 & 0.98 & 0.92 & 0.96  & 0.97 & 0.97 & 0.91 & 0.98 & 0.87\\\hline
\multirow{2}{*}{LPIPS$\downarrow$}  & TensoRF & 0.032 & 0.014 & 0.059 & 0.015& 0.017 & 0.009 &  0.036&  0.012& 0.098\\
 & Ours & 0.048 & 0.019 & 0.068 & 0.043 & 0.031 & 0.015 & 0.088 & 0.029 & 0.095\\\hline\hline
\multicolumn{2}{c|}{\textbf{LLFF}} & Avg. & Fern & Flower & Fortress & Horns & Leaves & Orchid & Room & Trex\\\hline
\multirow{2}{*}{PSNR$\uparrow$} & TensoRF& 26.51 & 25.31  & 28.22 & 31.14 & 27.64 & 21.34 & 20.02 & 31.80 & 26.61\\
 & Ours & 25.63 & 24.11 & 27.24& 30.41 & 26.86 & 20.76& 18.91 & 30.82 &25.94 \\\hline
\multirow{2}{*}{SSIM$\uparrow$} & TensoRF & 0.83& 0.82 & 0.86 & 0.89 & 0.86 &0.75 & 0.66 & 0.95& 0.89\\
 &Ours & 0.80 & 0.76 & 0.82 & 0.87 & 0.83 & 0.72 & 0.59 & 0.92 & 0.87\\\hline
\multirow{2}{*}{LPIPS$\downarrow$} & TensoRF & 0.135 & 0.161 & 0.121 & 0.084 & 0.146 & 0.167 & 0.204 & 0.093 & 0.108 \\
 & Ours & 0.146 & 0.181 & 0.115 & 0.089 & 0.146 & 0.146 & 0.255 & 0.122 & 0.118 \\\bottomrule
\end{tabular}
\caption{Per-scene quantitative comparison between TensoRF and ours.}
\label{tab:main1}
\vspace{-0.6cm}
\end{center}
\end{table*}
\subsection{Proof for Theorem \ref{weierstrass}}
Here we present a proof of the well-known Stone-Weierstrass theorem, basically from \cite{bernstein1912demo}. We rephrase the theorem here for readability.
\begin{theorem}
Suppose $f=f(x):[0, 1]\rightarrow (-\infty, \infty)$ is continuous. Then for any $\epsilon>0$
there is a polynomial $p=p(x)$ satisfying
$$
\sup_{x\in[0, 1]}|f(x) - p(x)| < \epsilon.
$$
Namely, polynomials are dense in the Banach space $C[0, 1]$.
\end{theorem}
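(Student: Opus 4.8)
The plan is to prove the statement constructively using Bernstein polynomials, which has the advantage of producing an explicit approximant rather than an abstract existence argument. For each $n\geq 1$ I would define the degree-$n$ Bernstein polynomial of $f$ by
\begin{equation}
B_n(x) = \sum_{k=0}^{n} f\!\left(\tfrac{k}{n}\right)\binom{n}{k}x^k(1-x)^{n-k},
\end{equation}
and the goal then reduces to showing $\sup_{x\in[0,1]}|B_n(x)-f(x)|\to 0$ as $n\to\infty$, since any sufficiently large $n$ will then supply a polynomial within $\epsilon$. The organizing observation is that the weights $b_{n,k}(x)=\binom{n}{k}x^k(1-x)^{n-k}$ are exactly the probabilities of a $\mathrm{Binomial}(n,x)$ random variable, so $B_n(x)$ is the expectation of $f(S_n/n)$ with $S_n\sim\mathrm{Binomial}(n,x)$; intuitively $S_n/n$ concentrates near $x$, so $f(S_n/n)$ should concentrate near $f(x)$.

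Next I would record the three elementary moment identities that drive the estimate: the partition of unity $\sum_{k=0}^n b_{n,k}(x)=1$, the correct mean $\sum_{k=0}^n \tfrac{k}{n}\,b_{n,k}(x)=x$, and the second-moment identity $\sum_{k=0}^n \bigl(\tfrac{k}{n}-x\bigr)^2 b_{n,k}(x)=\tfrac{x(1-x)}{n}\leq\tfrac{1}{4n}$. These follow from differentiating the binomial identity $(x+(1-x))^n=1$ in $x$, or equivalently from the mean and variance of $S_n$. Using the partition of unity I would rewrite the error as $B_n(x)-f(x)=\sum_{k=0}^n\bigl(f(\tfrac{k}{n})-f(x)\bigr)b_{n,k}(x)$, so that everything hinges on how far $f(k/n)$ can be from $f(x)$.

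The decisive step is to split this sum according to whether $k/n$ is close to $x$. Since $[0,1]$ is compact, $f$ is uniformly continuous and bounded; I would fix $M=\sup|f|$ and, given $\epsilon>0$, choose $\delta>0$ so that $|s-t|<\delta$ implies $|f(s)-f(t)|<\epsilon/2$. For indices with $|k/n-x|<\delta$ the continuity bound together with the partition of unity controls the contribution by at most $\epsilon/2$, while for indices with $|k/n-x|\geq\delta$ I would crudely bound $|f(k/n)-f(x)|\leq 2M$ and estimate the total weight of these ``far'' terms. I expect this far-term bound to be the crux, and I would handle it by a Chebyshev-type argument, $\sum_{|k/n-x|\geq\delta} b_{n,k}(x)\leq \tfrac{1}{\delta^2}\sum_k\bigl(\tfrac{k}{n}-x\bigr)^2 b_{n,k}(x)\leq \tfrac{1}{4n\delta^2}$, which is precisely where the second-moment identity becomes essential.

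Combining the two pieces yields $\sup_{x\in[0,1]}|B_n(x)-f(x)|\leq \epsilon/2 + 2M/(4n\delta^2)$, and choosing $n$ large enough that the second term falls below $\epsilon/2$ finishes the argument. The estimate is uniform in $x$ because both $M$ and the variance bound $1/(4n)$ are independent of $x$, which is exactly what delivers the supremum-norm conclusion and hence the density of polynomials in $C[0,1]$.
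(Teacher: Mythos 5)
Your proposal is correct and follows essentially the same route as the paper's proof: both use the Bernstein polynomials $\sum_{k} f(k/n)\binom{n}{k}x^k(1-x)^{n-k}$, the same three moment identities, the same split of the error sum into indices with $|k/n - x| < \delta$ and $|k/n - x| \geq \delta$, and the same Chebyshev-type second-moment bound on the far terms. Your probabilistic framing (binomial concentration) is just a repackaging of the algebraic identities the paper computes directly, and your final estimate $\epsilon/2 + 2M/(4n\delta^2)$ matches the paper's choice $n \geq M/(\delta^2\epsilon)$.
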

\begin{proof}
    Fix $f=f(x)\in C[0, 1]$ and $\epsilon>0$. Since $f=f(x)$ is continuous on the bounded closed interval $[0, 1]$, it is bounded and uniformly continuous, meaning that there are positive numbers $M>0$ and $\delta = \delta(\epsilon)>0$ such that
\begin{equation}\label{1}
|f(x)|\leq M , \qquad |f(x) - f(y)| < \frac{\epsilon}{2}
\end{equation}
for any $x, y \in[0, 1]$ satisfying $|x - y|< \delta$.

With $\delta$ fixed, let $n\geq \frac{M}{\delta^2\epsilon}$ be a positive integer. Consider the $n$th-order Bernstein polynomial \cite{bernstein1912demo}
$$
p(x) = \sum_{k=0}^n f(\frac{k}{n})C_n^kx^k(1-x)^{n-k}
$$
with $C^k_n = \frac{n!}{k!(n-k)!}$.
Notice that, for $x\in [0, 1]$,
\begin{equation}\label{2}
C_n^kx^k(1-x)^{n-k}\geq 0, \qquad \sum_{k=0}^n C_n^kx^k(1-x)^{n-k}=(x + 1-x)^n=1,
\end{equation}
\begin{equation}\label{3}
\begin{array}{rl}
\sum\limits_{k=0}^n kC_n^kx^k(1-x)^{n-k}= & \sum\limits_{k=1}^n k\frac{n!}{k!(n-k)!}x^k(1-x)^{n-k}\\[4mm]
= & nx\sum\limits_{k=1}^n \frac{(n-1)!}{(k-1)!(n-1-(k-1))!}x^{k-1}(1-x)^{n-1 -(k-1)}\\[4mm]
= & nx(x + 1 -x)^{n-1}=nx,
\end{array}
\end{equation}
\begin{equation}\label{4}
\begin{array}{rl}
\sum\limits_{k=0}^n k^2C_n^kx^k(1-x)^{n-k}= & \sum\limits_{k=0}^n kC_n^kx^k(1-x)^{n-k} + \sum\limits_{k=0}^n k(k-1)C_n^kx^k(1-x)^{n-k}\\[4mm]
= & nx + \sum\limits_{k=2}^n k(k-1)\frac{n!}{k!(n-k)!}x^k(1-x)^{n-k}\\[4mm]
=& nx +n(n-1)x^2 \sum\limits_{k=2}^n \frac{(n-2)!}{(k-2)!(n-k)!}x^{k-2}(1-x)^{n-k}\\[4mm]
=& nx +n(n-1)x^2 .
\end{array}
\end{equation}
Then, for any $x\in [0, 1]$, we deduce from (\ref{1})--(\ref{4}) that
\begin{equation*}
    \begin{aligned}
        |f(x) - p(x)| = & \left|\sum\limits_{k=0}^n [f(x) - f(\frac{k}{n})]C_n^kx^k(1-x)^{n-k}\right|\\[4mm]
\leq & \sum\limits_{k=0}^n |f(x) - f(\frac{k}{n})|C_n^kx^k(1-x)^{n-k} \\[4mm]
\leq & (\sum\limits_{k: |x - k/n|<\delta} + \sum\limits_{k: |x - k/n|\geq\delta})|f(x) - f(\frac{k}{n})|C_n^kx^k(1-x)^{n-k}\\[4mm]
< & (\frac{\epsilon}{2}\sum\limits_{k: |x - k/n|<\delta} + 2M\sum\limits_{k: |x - k/n|\geq\delta})C_n^kx^k(1-x)^{n-k}\\[4mm]
\leq  & \frac{\epsilon}{2} + 2M\sum\limits_{k: |x - k/n|\geq\delta}\frac{(nx - k)^2}{n^2\delta^2}C_n^kx^k(1-x)^{n-k}
\\[4mm]
\leq & \frac{\epsilon}{2} + \frac{2M}{n^2\delta^2}\sum\limits_{k=0}^n(nx - k)^2C_n^kx^k(1-x)^{n-k}\\[4mm]
\leq & \frac{\epsilon}{2} + \frac{2M}{n^2\delta^2}(n^2x^2 - 2nxnx + nx + n(n-1)x^2)
= \frac{\epsilon}{2} + \frac{2M}{n\delta^2}x(1-x)\\[4mm]
\leq & \frac{\epsilon}{2} + \frac{\epsilon}{2}=\epsilon.
    \end{aligned}
\end{equation*}
This completes the proof.
\end{proof}
\label{appendix:weierstrass}

\subsection{Quantitative results on NeRF-Synthetic and LLFF datasets}
\setcounter{equation}{0}
In this part, we present our per-scene quantitative results on NeRF-Synthetic and LLFF datasets in Tab. \ref{tab:test1} for Vanilla NeRF and in Tab. \ref{tab:main1} for TensoRF. The full result for training using GL-NeRF is presented in Tab. \ref{tab:train}

\begin{table*}
\begin{center}
\setlength\tabcolsep{3 pt}
\begin{tabular}{cl|c|cccccccc}
\multicolumn{2}{c|}{\textbf{NeRF-Synthetic}} & Avg. & Chair & Drums & Ficus & Hotdog & Lego & Mat. & Mic & Ship\\\hline
\multirow{2}{*}{PSNR$\uparrow$} & Vanilla& 30.63 & 34.32 & 25.80 & 29.54 & 35.49 & 29.53 & 29.04 & 31.78 & 29.52\\
 & Ours & 28.56 & 30.82 & 24.08 & 26.62 & 32.70 & 28.78 & 27.19 & 31.34 & 27.03 \\\hline
\multirow{2}{*}{SSIM$\uparrow$} & Vanilla & 0.95 & 0.98 & 0.93 & 0.97 & 0.97 & 0.95 & 0.95 & 0.97 & 0.87\\
 & Ours & 0.93 & 0.96 & 0.90 & 0.94 & 0.96 & 0.94 & 0.92 & 0.97 &0.84 \\\hline
\multirow{2}{*}{LPIPS$\downarrow$}  & Vanilla & 0.042 & 0.014 & 0.052 & 0.021& 0.034 & 0.042 &  0.035&  0.044& 0.092\\
 & Ours & 0.070 & 0.050 & 0.098 & 0.055 & 0.059 & 0.047 & 0.070 & 0.044 & 0.135\\\hline\hline
\multicolumn{2}{c|}{\textbf{LLFF}} & Avg. & Fern & Flower & Fortress & Horns & Leaves & Orchid & Room & Trex\\\hline
\multirow{2}{*}{PSNR$\uparrow$} & Vanilla& 27.62 & 26.82  & 28.37 & 32.59 & 28.83 & 22.38 & 21.20 & 32.87 & 27.93\\
 & Ours & 26.53 & 26.27 & 28.19 & 31.12 & 26.81 & 22.27 & 20.99 & 30.38 & 26.24\\\hline
\multirow{2}{*}{SSIM$\uparrow$} & Vanilla &0.88  & 0.86 & 0.89 & 0.93 & 0.90 & 0.82 & 0.74 & 0.96& 0.92\\
 &Ours & 0.85 & 0.84 & 0.88 & 0.89 & 0.86 & 0.81 & 0.73 & 0.93 & 0.89\\\hline
\multirow{2}{*}{LPIPS$\downarrow$} & Vanilla & 0.074 & 0.097 & 0.064 & 0.030 & 0.070 & 0.113 & 0.122 & 0.041 & 0.052 \\
 & Ours & 0.090& 0.106 & 0.066 & 0.064 & 0.096 & 0.115 & 0.125 & 0.075 & 0.075 \\\bottomrule
\end{tabular}
\caption{Per-scene quantitative comparison between Vanilla NeRF and ours.}
\label{tab:test1}
\vspace{-0.6cm}
\end{center}
\end{table*}

\begin{table*}
\begin{center}
\setlength\tabcolsep{3 pt}
\begin{tabular}{cl|c|cccccccc}
\multicolumn{2}{c|}{\textbf{Blender}} & Avg. & Chair & Drums & Ficus & Hotdog & Lego & Mat. & Mic & Ship\\\hline
\multirow{2}{*}{PSNR$\uparrow$} & Vanilla& 30.63 & 34.32 & 25.80 & 29.54 & 35.49 & 29.53 & 29.04 & 31.78 & 29.52\\
 & Ours & 29.18 & 32.43 & 24.38 & 26.92 & 33.91 & 29.49 & 27.27 & 31.55 & 27.47 \\\hline
\multirow{2}{*}{SSIM$\uparrow$} & Vanilla & 0.95 & 0.98 & 0.93 & 0.97 & 0.97 & 0.95 & 0.95 & 0.97 & 0.87\\
 & Ours & 0.93 & 0.97 & 0.91 &  0.94 & 0.96 & 0.95 & 0.92 & 0.97 & 0.84\\\hline
\multirow{2}{*}{LPIPS$\downarrow$}  & Vanilla & 0.037 & 0.014 & 0.052 & 0.021& 0.034 & 0.042 &  0.035&  0.044& 0.092\\
 & Ours & 0.056 & 0.029 & 0.087 & 0.050 & 0.052 & 0.038 & 0.065 & 0.046 & 0.122\\\hline\hline
\multicolumn{2}{c|}{\textbf{LLFF}} & Avg. & Fern & Flower & Fortress & Horns & Leaves & Orchid & Room & Trex\\\hline
\multirow{2}{*}{PSNR$\uparrow$} & Vanilla& 27.62 & 26.82  & 28.37 & 32.59 & 28.83 & 22.38 & 21.20 & 32.87 & 27.93\\
 & Ours & 27.21 & 26.63 & 28.05 & 31.93 & 28.05 & 22.35 & 21.12 & 32.51 & 27.01\\\hline
\multirow{2}{*}{SSIM$\uparrow$} & Vanilla & 0.88 & 0.86 & 0.89 & 0.93 & 0.90 & 0.82 & 0.74 & 0.96& 0.92\\
 &Ours & 0.87 & 0.85 & 0.88 & 0.91 & 0.88 & 0.81 & 0.74 & 0.95 & 0.90\\\hline
\multirow{2}{*}{LPIPS$\downarrow$} & Vanilla & 0.073 & 0.097 & 0.064 & 0.030 & 0.070 & 0.113 & 0.122 & 0.041 & 0.052 \\
 & Ours & 0.087 & 0.121 & 0.075 & 0.043 & 0.089 & 0.117 & 0.131 & 0.053 & 0.069\\\bottomrule
\end{tabular}
\caption{Quantitative results when training on Blender and LLFF Datasets.}
\label{tab:train}
\end{center}
\end{table*}

\end{document}